\documentclass[runningheads]{llncs}

\usepackage{amsmath,amssymb,amsfonts}
\usepackage{booktabs} 
\usepackage{caption} 
\usepackage{subcaption} 
\usepackage{graphicx}
\usepackage{pgfplots}
\usepackage[utf8]{inputenc}
\usepackage{multicol}
\usepackage{algpseudocode,algorithm,algorithmicx}

\usepackage{graphicx}

\newcommand{\cT}{\mathcal{T}}
\newcommand{\cX}{\mathcal{X}}

\definecolor{blue}{HTML}{1F77B4}
\definecolor{orange}{HTML}{FF7F0E}
\definecolor{green}{HTML}{2CA02C}

\pgfplotsset{compat=1.14}

\setlength{\floatsep}{3pt plus 1pt minus 1pt}
\setlength{\textfloatsep}{3pt plus 1pt minus 1pt}
\setlength{\intextsep}{3pt plus 1pt minus 1pt}
\setlength{\abovecaptionskip}{2pt plus 1pt minus 1pt}

\begin{document}

\title{DBSCAN of Multi-Slice Clustering for Third-Order Tensors}

\author{Dina Faneva Andriantsiory\inst{1} \and
Joseph Ben Geloun\inst{1} \and
Mustapha Lebbah\inst{2}}

\institute{   LIPN, UMR CNRS 7030  , Sorbonne Paris Nord University, Villetaneuse, France\\
\and
DAVID Lab, University of Versailles, Universit\'e Paris-Saclay, Versailles, France \\
}

\maketitle            

\begin{abstract}
Several methods for triclustering three-dimensional data require as hyperparameters the cluster size set or the number of clusters in each dimension. These methods raise an issue since, for real datasets, those inputs cannot be known without extreme cost. 
Recently introduced, 
the Multi-Slice Clustering (MSC) tackles this issue by using
a threshold parameter to perform the data clustering. The MSC finds signal slices that lie in a lower dimensional subspace of 3rd-order rank-1 tensor datasets. The present work  
addresses an extension
of this algorithm, namely the MSC-DBSCAN, that extracts several slice clusters that lie in  different subspaces, when the 
3rd-order dataset is a sum of $r \ge 1$ rank-1 tensors.
Our algorithm uses the same input as the MSC algorithm and reduces to the same cluster solution for rank-1 tensor dataset. 
\end{abstract}

\section{Introduction}
From an algebraic point of view, an $n$-way or $n$-th order tensor is an element of the tensor product of $n$ vector spaces, each of which has its own coordinate system \cite{kolda2009tensor}. 
The tensor order indicates the number of dimensions of the array.  
As a data structure, consider $m_1$ individuals with $m_2$ features and collect the data for each individual-feature pair at $m_3$ different times. This is an example of a dataset structured in three dimensions, see figure \ref{fig:tensor}. A convenient way to encode such data is given by a tensor of order $3$. Multidimensional data of this kind arises in several contexts such as neuroscience \cite{dataneuroscience}, genomics data \cite{datagenomics1,datagenomics2} and computer vision \cite{datacomputerVision}.  
To learn this data without having a detailed understanding of it, we use unsupervised learning.  Due to their increasing complexity, the mining of higher dimensional data naturally proceeds by identifying subspaces with specific features. Clustering is one of the most popular unsupervised machine learning methods for extracting relevant information such as structural similarity in data. It proceeds by the partitioning of a dataset into meaningful groups or clusters.  The cluster can be defined as a collection of objects that are similar to each other and dissimilar or different from the objects in the other clusters. Thus, several computational methods for clustering multidimensional data, from matrices to higher order tensors, have been introduced in the literature, see for example the review \cite{HRMSC2018SurveyTriclustering}. 
For a detailed account of various available approaches closer to our present work, see \cite{MichaelKathrin2008NTF,kolda2009tensor,HoreTensorDecompositiongene2016,FeiziNIPS2017,DinaTB,TCarson2017,BachLearningSpectralClustering}.

The general structure of clustering algorithms requires, obviously, the data to be treated but also hyperparameters (inputs): the number of clusters or cluster sizes. These  hyperparameters are not easy to set, and more to the point, especially
difficult to set for real data. Moreover, their values influence the quality of the algorithm's output. In \cite{andriantsiory2021multislice}, the authors provide an method that replaces the number of clusters with a measure of similarity within a cluster. This method is called Multi-Slice Clustering (MSC) and it performs on  3rd-order tensors. The strong point of the MSC algorithm  guarantees the quality of the output clustering according to an input threshold
parameter 
(strong similarity within a cluster and strong cluster separation). The MSC method performs well while possesses   
less constraints on hyperparameters. It provides a
good grasp on the cluster quality, and remains competitive compared to other methods for biclustering and triclustering such as 
TFS \cite{FeiziNIPS2017}, Tucker+$k$-means, 
CP+$k$-means \cite{WillSun2019}. 

There is however a limitation of the application domain of the MSC method. Indeed, the latter is designed to find one cluster within rank-1 tensor datasets. 
Such a condition is equally difficult to verify for real data. Therefore, it is essential to extend this method to higher rank tensor datasets to enable multiple cluster learning.

In this paper, we present an extension of the MSC algorithm.  
The new method, named MSC-DBSCAN, is able to identify many clusters within a tensor dataset decomposed as a sum of several rank-1 tensors. 
As the name suggests it, this is a combined method of the MSC 
and the so-called DBSCAN (Density-Based Spatial Clustering of Applications with Noise)  \cite{ester1996densitydbscan}. 
We start with the MSC method which delivers a cluster, then this output goes to the next step of either validation or refinement by the  DBSCAN method. 
The full procedure 
takes the same input 
threshold parameter of  the MSC method. 
If the input data is a rank-1 tensor, then the output is identical to that of the MSC algorithm.  
For $k>1$ rank-1 tensor data, the extension makes a major difference 
compared to the MSC alone. 
In the latter case, the last DBSCAN process will split the
larger cluster in several distinct clusters. 
We experimentally demonstrate the performance of our 
proposed method. 

The paper is organized as follows:  in section \ref{sec:problemFormulation}, we list our notation and detail our model. 
Section \ref{sec:msc} reviews the MSC algorithm and its limitations. Then, in section \ref{sec:mscExtension}, we present the MSC-DBSCAN algorithm after having set up its theoretical approach. In section \ref{sec:experiments},  we apply the new algorithm to  synthetic and real datasets and compare its performances with the MSC algorithm. 

\

\begin{figure}[ht]
	 	\centering
	 	\includegraphics[width=7cm, height=3.8cm ]{./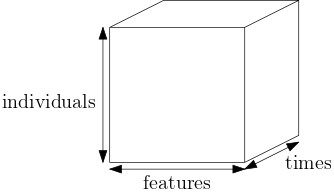}
	 	\caption{The 3-order tensor dataset.}
	 	\label{fig:tensor}
	\end{figure}

\section{Notation and model}
\label{sec:problemFormulation}
\subsection{Notation}
In this paper, we use $\mathcal{T}, \mathcal{X}, \mathcal{Z}$ to represent the dataset tensor, signal tensor, and noise tensor respectively. 
The matrices are represented by the capital letters $A, C, T, V, Z $. We use the Matlab notation for the entries of the tensor. $\|M\|$ and $\|M\|_F$  represent the operator norm and the Frobenius norm of the matrix $M$, respectively. For any matrix, $A$ and $B= \vert A \vert$, the entries of $B$ are the absolute value of the entries of $A$. The lowercase boldface letters $\mathbf{x}, \mathbf{v}, \mathbf{w}, \cdots$ represent vectors and $\|\mathbf{x}\|_2$ is the euclidean norm of the vector $\mathbf{x}$,  the lowercase $x,y, \alpha, \gamma,\lambda$ represent scalars. For any set $J$, $\vert J\vert$ denotes the cardinality of $J$ and $\bar{J}$ denotes the complementary of $J$ in a larger set. For an integer $n>0$, we denote $[n] = \{1,\cdots,n\}$.  The asymptotic notation $a(n) = \mathcal{O}(b(n))$ (res. $a(n) = \Omega(b(n))$) means that, there exists a universal constant $c$ such that for sufficiently large $n$, we have $|a(n)| \le c b(n)$ (resp. $\vert a(n) \vert \ge  c b(n)$). 

\subsection{Model}

We will be mainly interested in 3rd-order tensors. Thus, 
let $\mathcal{T} \in \mathbb{R}^{m_1\times m_2\times m_3}$ be our tensor dataset. 
We decompose it as $\mathcal{T} = \mathcal{X} + \mathcal{Z}$, where $\mathcal{X}$ is called the signal tensor and $\mathcal{Z}$ the noise tensor.

A CANDECOMP/PARAFAC (CP) decomposition of a tensor defines it as a sum of $r$ rank-1 tensors \cite{kolda2009tensor}.
Assuming a particular
CP-decomposition of the signal tensor, we write:
\begin{equation}
\mathcal{T} = \mathcal{X} + \mathcal{Z} = \sum_{i = 1}^{r}\gamma_i\, \mathbf{w}_{i}\otimes \mathbf{u}_{i}\otimes\mathbf{v}_{i} + \mathcal{Z} \label{eq:problem}
\end{equation}
where $\forall i, \gamma_i>0$ stands for the signal strength,  $\mathbf{w}_{i}\in \mathbb{R}^{m_1}, \mathbf{u}_{i}\in\mathbb{R}^{m_2}$ and $\mathbf{v}_{i}\in\mathbb{R}^{m_3}$ are unit vectors, and for a fix $i_0$,  $\mathbf{w}_{i_0}\otimes \mathbf{u}_{i_0}\otimes\mathbf{v}_{i_0}$ is called a rank-1 tensor \cite{kolda2009tensor}.  

Each low-dimensional subspace 
or rank-1 tensor corresponds to one cluster.
The cluster sets from mode-1, mode-2, and mode-3 in the $i-$th subspace are denoted by $J_1^i$, $J_2^i$, and $J_3^i$.
Concerning the noise model, 
we assume that the entries of $\mathcal{Z}$ are independently identically distributed (i.i.d) and have a standard normal distribution.
This is the traditional noise model in an unsupervised learning method for tensor data, see for instance \cite{FeiziNIPS2017,Cai2017statistical}.

For a fixed $i$ (a cluster index), 
the Cartesian product of a pair of sets from $J_1^i, J_2^i$, and $J_3^i$ represents the tensor biclustering.   The collection of the three sets defines a triclustering \cite{FeiziNIPS2017,zhao2005tricluster}.


\section{Multi-Slice Clustering for 3-order tensor: a review}
\label{sec:msc}
In this section, we provide a lightening review of \cite{andriantsiory2021multislice}. 

Equation \eqref{eq:problem} with 
$r=1$ describes our tensor modeling. This means that 
the signal tensor  is of the form 
 $ \mathcal{X} = \gamma \mathbf{w}\otimes \mathbf{u} \otimes \mathbf{v}$. 
Fixing one index of $\mathcal{T}$ and collecting the set of data entries defines a matrix or slice of the tensor.
The MSC method aims at finding, in each dimension of the tensor $\mathcal{T}$, the indices of the matrix slices that are highly similar up to a threshold parameter $\epsilon$. The output of the MSC algorithm is therefore a single 3D cluster.  

 Fixing an index $i$ in the mode-1 of $\cT$, we obtain a matrix $T_i\in \mathbb{R}^{m_2\times m_3}$ defined by (Matlab notation) 
\begin{equation}
    T_i = \mathcal{T}(i,:,:) = \mathcal{X}(i,:,:) + \mathcal{Z}(i,:,:) = X_i + Z_i
\end{equation}
The slice $T_i$ is a sum of the corresponding slice from the signal tensor $X_i $ and the noise tensor $Z_i $. A similar principle is applied to mode-2 or mode-3 of the tensor to collect slices in the remaining modes. 

The MSC learns the information within each slice via spectral analysis. To do so, it starts with the computation of the slice covariance matrix $C_i = T_i^t T_i$. 
 The largest eigenvalue and corresponding eigenvector associated with the matrix $C_i$, 
 covers the most relevant information in each slice indexed by $i$.

Our task is the determination of the element of $J_1$ in mode-1. Recovering $J_2$ and $J_3$ in 
the rest of the modes follows the same idea.
For all slices, we denote by $\underline\lambda_i$ 
	the top eigenvalue and $\tilde{\mathbf{v}}_i$  the top eigenvector of $C_i$ for $i\in [m_1]$. The following matrix contains the dispersion of information of all slices  
	\begin{equation}
	V = \begin{bmatrix}\tilde{\lambda}_1\tilde{\mathbf{v}}_1 & \cdots & \tilde{\lambda}_{m_1}\tilde{\mathbf{v}}_{m_1}	\end{bmatrix}
	\end{equation}
 where we set $\tilde{\lambda}_i$ to $ \underline\lambda_i/\lambda, \forall i\in[m_1]$, and $\lambda =  \max(\underline\lambda_1,\cdots,\underline\lambda_{m_1})$.
 We construct the matrix $C$ defined as $C=|V^tV|$ that now contains the similarity between slices. The marginal sum of each row of $C$ is $d_i$, $i\in [m_1]$.  These form a vector $\mathbf{d}$ of marginal sums.  
 The selection of the cluster $J_1$ uses the vector $\mathbf{d}$, and the following theorem guarantees the quality within the output cluster (see theorem III.1 in \cite{andriantsiory2021multislice}). 

\begin{theorem}\label{thm1}	Let $l = \vert J_1\vert$, assume that $\sqrt{\epsilon}\le \frac{1}{m_1-l}$. $\forall i,n\in J_1 $, for $\lambda = \Omega(\mu)$, there is a constant $c_1>0$ such that
	\begin{equation}
	\label{eq:didn}
		\vert d_i - d_{n}\vert \le l\frac{\epsilon}{2} + \sqrt{\log(m_1-l)}
	\end{equation}
	holds with probability at least $1-e(m_1-l)^{-c_1}$, where $\mu=(\sqrt{m_2-1} + \sqrt{m_3})^2$. 
\end{theorem}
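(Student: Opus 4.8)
The plan is to split the row sum $d_i=\sum_{k=1}^{m_1}C_{ik}$, where $C_{ik}=\tla_i\tla_k\,\vert\langle\tbv_i,\tbv_k\rangle\vert$, according to whether the column index $k$ lies in the signal cluster $J_1$ or in its complement $\bar J_1$, and to control the two blocks separately:
\begin{equation}
d_i-d_{n}=\sum_{k\in J_1}\bigl(C_{ik}-C_{nk}\bigr)+\sum_{k\in\bar J_1}\bigl(C_{ik}-C_{nk}\bigr).
\end{equation}
The first sum will yield the $l\frac{\epsilon}{2}$ term and the second the $\sqrt{\log(m_1-l)}$ term. First I would record the spectral behaviour of the slices. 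For $i\in J_1$ one has $X_i=\ga w_i\,\bu\bv^t$, so the signal part of $C_i$ is the rank-one matrix $\ga^2 w_i^2\,\bv\bv^t$, which (under the model's assumption that $\bw$ is essentially uniform on its support $J_1$) is common across $J_1$. Combining Weyl's inequality with the Davidson--Szarek tail $\Vert Z_i\Vert\le\sqrt{m_2}+\sqrt{m_3}+t$ (this is where $\mu=(\sqrt{m_2-1}+\sqrt{m_3})^2$ enters as the typical squared noise level, the $m_2-1$ reflecting the projection off the mode-2 signal direction $\bu$) and a Davis--Kahan bound, the hypothesis $\lambda=\Omega(\mu)$ forces, for every $i\in J_1$, the top eigenvalue $\underline\lambda_i$ to concentrate at the common signal value and $\tbv_i$ to align with $\bv$. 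Hence $\tla_i=\underline\lambda_i/\lambda=1+o(1)$ on $J_1$, whereas for $k\in\bar J_1$, where $C_k=Z_k^tZ_k$ is pure noise, $\tla_k$ stays strictly below $1$ and $\tbv_k$ is a Haar-uniform direction of $\R^{m_3}$ independent of $\bv$.

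For the signal block I would invoke the clustering threshold, which guarantees $\vert\langle\tbv_i,\tbv_k\rangle\vert\ge 1-\frac{\epsilon}{2}$ for $i,k\in J_1$. Together with $\tla_i,\tla_k\le 1$ this gives $\tla_i\vert\langle\tbv_i,\tbv_k\rangle\vert,\ \tla_n\vert\langle\tbv_n,\tbv_k\rangle\vert\in[1-\tfrac{\epsilon}{2},1]$, so each term satisfies $\vert C_{ik}-C_{nk}\vert\le\frac{\epsilon}{2}$; summing over the $l=\vert J_1\vert$ columns yields $\bigl\vert\sum_{k\in J_1}(C_{ik}-C_{nk})\bigr\vert\le l\frac{\epsilon}{2}$.

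The hard part is the noise block, and this is where the probabilistic statement lives. Since $\tbv_k$ is a uniformly random unit vector in $\R^{m_3}$ for $k\in\bar J_1$, the overlap with the fixed signal direction concentrates at scale $1/\sqrt{m_3}$ with sub-Gaussian tails $\pr(\vert\langle\tbv_i,\tbv_k\rangle\vert\ge t)\le 2e^{-m_3 t^2/2}$. Calibrating $t\asymp\sqrt{\log(m_1-l)/m_3}$ and taking a union bound over the $m_1-l$ indices of $\bar J_1$ shows that all these overlaps are simultaneously small on an event of probability at least $1-e(m_1-l)^{-c_1}$; the exponent $c_1$ and the prefactor $e$ come precisely from balancing $t$ against the union bound (and from the Gaussian operator-norm tail controlling $\tla_k$). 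On this event each noise column contributes negligibly and the accumulated sum over $\bar J_1$ is bounded by $\sqrt{\log(m_1-l)}$, with the constraint $\sqrt{\epsilon}\le\frac{1}{m_1-l}$ ensuring the two error contributions are correctly balanced. Adding the two blocks gives the claimed inequality.

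The delicate point throughout is the joint control of the normalisation $\tla$ and the eigenvector overlaps under the single spectral-gap hypothesis $\lambda=\Omega(\mu)$: once Weyl and Davis--Kahan secure the separation of signal slices from noise slices, the signal block reduces to the elementary $\frac{\epsilon}{2}$ per-term estimate and the noise block reduces to concentration of a random overlap plus a union bound. I expect the genuinely technical step to be matching the noise accumulation to the exact order $\sqrt{\log(m_1-l)}$ and the failure probability $e(m_1-l)^{-c_1}$, since this is where the regime assumptions on $\ga$, $\mu$ and $\epsilon$ must be reconciled.
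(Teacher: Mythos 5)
First, note that the paper you were asked to match does not actually prove Theorem~\ref{thm1}: it imports it verbatim as Theorem~III.1 of \cite{andriantsiory2021multislice}, so the comparison is with that source's argument. Your overall architecture --- split $d_i-d_n$ into a $J_1$-block and a $\bar{J}_1$-block, charge the first with $l\frac{\epsilon}{2}$ via the $\epsilon$-similarity of signal slices, and the second with $\sqrt{\log(m_1-l)}$ via Gaussian concentration plus a union bound over the $m_1-l$ noise indices --- is the right skeleton, and your signal block is essentially correct (modulo the remark that $c_{ik}\ge 1-\frac{\epsilon}{2}$ including the $\tla$ factors is itself a probabilistic statement needing $\la=\Omega(\mu)$, which you do sketch via Weyl and Davis--Kahan).

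The genuine gap is in the noise block, and it is quantitative, not cosmetic. You bound each overlap $\vert\langle\tbv_i,\tbv_k\rangle\vert$, $k\in\bar{J}_1$, individually at scale $t\asymp\sqrt{\log(m_1-l)/m_3}$ and then sum; this yields $(m_1-l)\sqrt{\log(m_1-l)/m_3}$, which is at most $\sqrt{\log(m_1-l)}$ only when $m_1-l\lesssim\sqrt{m_3}$ --- and the theorem assumes no relation between $m_1$ and $m_3$, so in the regime $m_1-l\gg\sqrt{m_3}$ your estimate does not give the claimed bound. What your argument misses is that the quantity to control is the \emph{difference} $c_{ik}-c_{nk}$, and for $k\in\bar{J}_1$ the two terms nearly cancel: since $i,n\in J_1$ are $\epsilon$-similar, after sign alignment $\Vert\tbv_i-\tbv_n\Vert_2^2=2-2\vert\langle\tbv_i,\tbv_n\rangle\vert\le\epsilon$, so per noise column $\vert c_{ik}-c_{nk}\vert\le\tla_k\vert\langle\tbv_i-\tbv_n,\tbv_k\rangle\vert$ plus a $\vert\tla_i-\tla_n\vert$ correction, i.e.\ of order $\sqrt{\epsilon}$ up to fluctuations. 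Summing this deterministic part over the $m_1-l$ noise columns gives $(m_1-l)\sqrt{\epsilon}\le 1$ --- this is the actual role of the hypothesis $\sqrt{\epsilon}\le\frac{1}{m_1-l}$, which your write-up invokes only rhetorically (``ensuring the two error contributions are correctly balanced'') but never uses: $\epsilon$ appears nowhere in your noise-block computation. The union bound over $\bar{J}_1$ then only has to absorb the residual stochastic fluctuations of the $\tla_k$ and the overlaps, which is where $\sqrt{\log(m_1-l)}$ and the failure probability $e(m_1-l)^{-c_1}$ legitimately arise. Without the cancellation step your proof establishes a strictly weaker statement, valid only under an extraneous dimensional restriction.
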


The assumption of a rank-1 tensor is viewed as a principal limitation of the application of the 
MSC method. In this section, we propose a radical improvement and extension of the MSC algorithm  to determine more clusters from the input dataset. Therefore, we assume that the dataset is approximated by a sum of $r$ rank-1 tensors (with $r\ge 1$). To illustrate our method, we focus on $r = 2$ but our explanations will easily stand for $r\ge 2$. 
The following statement proves that the MSC is insufficient for 
detecting multiple triclusters. 

\begin{proposition} \label{proposition}
Let $J_1$ and $J_2$ be two disjoint slice clusters that lie in two different subspaces. By simplicity, we assume that $\lambda \approx \lambda_{(1)} \approx \lambda_{(2)}=\Omega(m_1)$ and $\vert J_1\vert = \vert J_2\vert = l$, then the MSC gathers the two clusters as one cluster of slices with high probability.
\end{proposition}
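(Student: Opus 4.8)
The plan is to show that the vector $\mathbf{d}$ of marginal sums takes essentially the same value on $J_1$ and on $J_2$, so that the thresholding step of the MSC, which can only separate slices whose $d$-values differ, is forced to return $J_1\cup J_2$ as a single cluster.

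First I would analyze the individual slices. Fix $i\in J_1$. Since $J_1$ and $J_2$ are disjoint slice clusters living in different subspaces, the mode-1 loading of the second rank-1 term vanishes on $J_1$, i.e. $(\bw_2)_i=0$, so from \eqref{eq:problem} the signal part of the slice reduces to $X_i=\ga_1(\bw_1)_i\,\bu_1\bv_1^t$, a rank-1 matrix, and $X_i^tX_i=\ga_1^2(\bw_1)_i^2\,\bv_1\bv_1^t$. The covariance is $C_i=T_i^tT_i=(X_i+Z_i)^t(X_i+Z_i)$, a rank-1 signal term of size $\approx\la_{(1)}$ perturbed by the Gaussian block $Z_i^tZ_i$, whose operator norm concentrates around $\mu=(\sqrt{m_2-1}+\sqrt{m_3})^2$. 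In the same regime that underlies Theorem~\ref{thm1}, with $\la=\Omega(m_1)$ dominating the noise scale $\mu$, Weyl's inequality gives $\tla_i\approx1$ and a Davis--Kahan bound gives $\tbv_i\approx\bv_1$; symmetrically $\tbv_n\approx\bv_2$ and $\tla_n\approx1$ for $n\in J_2$.

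Next I would compute the marginal sums from $C=|V^tV|$, whose entries are $C_{in}=\tla_i\tla_n\,|\langle\tbv_i,\tbv_n\rangle|$. Writing $\rho=|\langle\bv_1,\bv_2\rangle|$ and splitting the row sum $d_i$ (for $i\in J_1$) into its within-$J_1$, cross-$J_2$ and noise-slice parts, the first block contributes $\sum_{n\in J_1}\tla_i\tla_n\approx l$, the second contributes $\sum_{n\in J_2}\tla_i\tla_n\rho\approx l\rho$, and the noise slices contribute a lower-order term because their normalized eigenvalues $\tla_n\approx\mu/\la=O(\mu/m_1)$ are of vanishing order. Hence $d_i\approx l(1+\rho)$. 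The assumption $\vert J_1\vert=\vert J_2\vert=l$ together with $\la_{(1)}\approx\la_{(2)}$ makes the calculation symmetric under exchanging $J_1$ and $J_2$: for $n\in J_2$ the same split gives $d_n\approx l(1+\rho)$, with the cross term now coming from $J_1$. Thus the leading (expected) values of $d_i$ and $d_n$ coincide.

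Finally I would promote this equality of means into a high-probability bound on $\vert d_i-d_n\vert$ of exactly the form \eqref{eq:didn}. Because the noise blocks $Z_i^tZ_i$ are independent across slices and Gaussian, the residual fluctuations of the within-, cross- and noise-contributions are controlled by the same operator-norm concentration estimates used to prove Theorem~\ref{thm1}, yielding $\vert d_i-d_n\vert\le l\frac{\epsilon}{2}+\sqrt{\log(m_1-l)}$ for $i\in J_1,\,n\in J_2$ with probability at least $1-e(m_1-l)^{-c_1}$. Since every slice of $J_1\cup J_2$ then has a $d$-value within this tolerance of every other, while the pure-noise slices have $d$-values of a strictly smaller order, the MSC selection rule admits no cut that isolates $J_1$ from $J_2$ and returns their union as one cluster. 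The hard part is this last step: controlling the fluctuation of the cross-cluster and noise-slice contributions (not merely their means) tightly enough to match the tolerance in \eqref{eq:didn}, which is where the independence and Gaussian concentration of the slice noise blocks, as exploited in the proof of Theorem~\ref{thm1}, must be reused.
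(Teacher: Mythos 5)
Your proposal is correct and follows essentially the same route as the paper, whose own proof is only a two-line sketch: expand $\vert d_i - d_j\vert$ for $i\in J_1$, $j\in J_2$ into similarity contributions from slices inside $J_1\cup J_2$ versus outside, and bound each block with the concentration behind Theorem~\ref{thm1}, using the symmetry $\vert J_1\vert=\vert J_2\vert=l$ and $\lambda_{(1)}\approx\lambda_{(2)}$ to match the leading terms. Your write-up simply makes explicit the spectral-perturbation preliminaries (Weyl, Davis--Kahan) and the fluctuation control that the paper leaves implicit in its appeal to Theorem~\ref{thm1}.
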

\begin{proof} This is a corollary of theorem \ref{thm1}. We expand the difference $|d_i - d_j|$, with 
$i\in J_1$ and $j \in J_2$ 
into sums of similarity entries that belong  
to $J_1 \cup J_2$ or not.  
Each term will be bounded using theorem \ref{thm1}. 

\end{proof}

Two other theorems in \cite{andriantsiory2021multislice} show the proper separation of the cluster and the rest of the data,
proving also that the MSC performs well for $r=1$. 
Our interest lies  in $r>1$ and will motivate our next study.


\section{MSC-DBSCAN}
\label{sec:mscExtension}

In this section, without loss of generality, we assume that our dataset is a sum of 2 rank-1 tensors.  The generalization of the method to a general tensor follows the same idea.

\subsection{Problem formulation}
Let $\mathcal{T} \in \mathbb{R}^{m_1\times m_2\times m_3}$ be the tensor dataset. 
It decomposes as  $\mathcal{T} = \mathcal{X} + \mathcal{Z}$ where $\mathcal{X}$ is the signal tensor and $\mathcal{Z}$ is the noise tensor. We assume that
\begin{equation}
\mathcal{T} = \mathcal{X} + \mathcal{Z} = \sum_{i = 1}^{r}\gamma_i\, \mathbf{w}_{i}\otimes \mathbf{u}_{i}\otimes\mathbf{v}_{i} + \mathcal{Z} \label{eq:msc_problem}
\end{equation}
where $\forall i, \gamma_i>0$ stands for the signal strength,  $\mathbf{w}_{i}\in \mathbb{R}^{m_1}, \mathbf{u}_{i}\in\mathbb{R}^{m_2}$ and $\mathbf{v}_{i}\in\mathbb{R}^{m_3}$ are unit vectors. 
Thus, the signal $\cX$ 
is a sum of $r$ rank-1 tensors.

The MSC method selects the elements of the cluster from the marginal similarity vector $\mathbf{d}$ of each slice. However, for a dataset with 
CP-decomposition as a sum of more than one rank-1 tensors,   
 two slices of indices $i$ and $j$ ($i\neq j$) which have similarity close to zero may have the same corresponding entries in the vector $\mathbf{d}$.
If the tensor dataset is decomposed as the sum of several rank-1 tensors, the inequality of theorem \ref{thm1} may fail for the determination of a single cluster. 
Thus, there is room for having several clusters
within the detected MSC cluster. 

We illustrate the occurrence of two clusters in the following:
\begin{itemize}
    \item If the two clusters are well separated, there is a gap between the marginal similarity values within the vector $\mathbf{d}$. See figure \ref{fig:J1capJ2}. This case may happen
    in two situations:
    (1) the two clusters have same size, and the top eigenvalues corresponding to the two clusters have a significant difference; (2) the two clusters have the same eigenvalue but the cluster sizes are very different ($ \vert J_2\vert \gg \vert J_1\vert$). In this case, we can run iteratively the MSC method to detect 
    $J_1$ and $J_2$. 
    Indeed,   running first MSC detects the elements of $J_1$ and, then running again the MSC on the remaining dataset will enable to detect the second cluster $J_2$ (we can keep up on the rest of the data).
   
    \item  If there is no significant gap between the two clusters, the MSC method gathers the two clusters in one which is then output. This is because the necessary condition of the equation \eqref{eq:didn} in theorem \ref{thm1} is verified for all elements in $J_1\cup J_2$. We illustrate this case in figure \ref{fig:J1UJ2}. This case happens when: 
    (1) the top eigenvalues associated with the covariance matrices of the 
    cluster slices are equal and the two cluster sets have the same cardinality; 
    (2) another possibility is when 
    there are similar entries (corresponding to  two or more slices) within the vector $\mathbf{d}$. 
    In this case, an iteration of the MSC method will not separate the elements of the clusters. 
    The proof of this statement is presented in proposition \ref{proposition}.
\end{itemize}

	\begin{figure}[htbp]
	 	\centering
	 	\includegraphics[width=9cm, height=4.5cm ]{./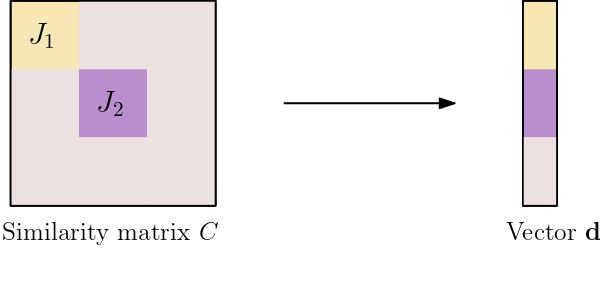}
	 	\caption{Similarity matrix $C$ for a tensor with two disjoint clusters and the corresponding vector $\mathbf{d}$.}
	 	\label{fig:two_disjoint_clusters}
	 \end{figure}

	\begin{figure}[htbp]
 		 \begin{subfigure}[b]{0.4\textwidth}
	 		\centering
	 	\includegraphics[width=5.4cm, height=4.3cm ]{./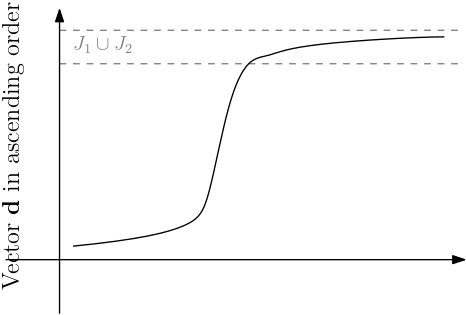}
				\caption{$J_1$ and $J_2$ have similar entries with $\mathbf{d}$.}
			 			\label{fig:J1UJ2}
	 	\end{subfigure}
		\hspace{1.5em}
	 	 \begin{subfigure}[b]{0.4\textwidth}
	 	\centering
	 	\includegraphics[width=5.4cm, height=4.5cm ]{./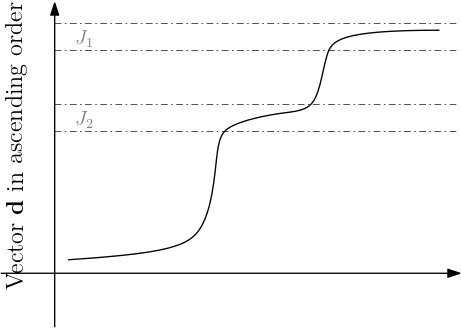}
		\caption{$J_1$ and $J_2$ are well separated within $\mathbf{d}$.}
				\label{fig:J1capJ2}
	 	\end{subfigure}
	 	\caption{Plot of the entries of the vector $\mathbf{d}$ in ascending order.}
	 	\label{fig:vec_d}
	 \end{figure}

We propose  the MSC-DBSCAN method to solve this problem and to distinguish these clusters.  

\subsection{MSC-DBSCAN method}
As its name  suggests it,  
the MSC-DBSCAN method is a sequence of two methods. It starts with the MSC, then it follows a  verification/validation/refinement of the cluster output with the DBSCAN 
analysis (Density-Based Spatial Clustering of Applications with Noise)
\cite{ester1996densitydbscan}. 
The MSC method, based on the hyperparameter $\epsilon$,  first delivers 3 slice clusters, one for each mode. As a second move,  the DBSCAN analysis refines the similarity relation
 between the slices of the selected cluster. 
If the output is exactly one cluster (cannot be further clustered), then the MSC-DBSCAN will confirm it. However, if the output of the MSC may be further decomposed in  different clusters, the DBSCAN will proceed with the splitting of the initial ouput result.
 We must keep in mind that the DBSCAN needs two hyperparameters to detect the cluster in the input: the neighborhood radius and the minimal number of neighborhoods, denoted by $\varepsilon$ and $Minpts$, respectively. Thus, one realizes that the second part of the procedure has all necessary information withdrawn the MSC method.

The hyperparameter $\varepsilon$ is obtained from the hyperparameter of the MSC algorithm $\epsilon$, and the cluster size of the output $l$. The neigborhood radius is 
\begin{equation}
    \varepsilon =  \sqrt{\frac{l\epsilon}{2} +  \sqrt{\log(m_1 -l)} }
\end{equation}
    and we take the minimum neighborhoods as $Minpts = 2$. We use the euclidean distance to measure the dissimilarity in the splitting algorithm. The following theorem justifies the performance of the method.

\begin{theorem}
We denote by $J$ the output clustering output from the MSC algorithm. Let $J_1$ and $J_2$ the two disjoint clusters lie in two different subspaces such that $J=J_1\cup J_2$. Then, 
\begin{itemize}
\item for $i,j \in J_1$ (or belonging to $J_2$), we have
\begin{equation}
d(c_{i.},c_{j.}) \le  \sqrt{\frac{l\epsilon}{2} +  \sqrt{\log(m_1 -l)} },
\end{equation}
\item for $i\in J_1$ and $j\in J_2$, we have,
\begin{equation}
d(c_{i.},c_{j.}) \ge  \sqrt{\frac{l\epsilon}{2} +  \sqrt{\log(m_1 -l)} }.
\end{equation}
\end{itemize}
with a high probability when $m_1$ becoming  large, where $c_{i.}$ represents the $i$-th column of matrix $C$.
\end{theorem}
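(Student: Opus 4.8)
The plan is to exploit the block structure of the similarity matrix $C$ and reduce both inequalities to the concentration estimate already established in Theorem \ref{thm1}. First I would record two elementary facts about $C$: since $\tilde\lambda_i=\underline\lambda_i/\lambda\le 1$ and the $\tilde{\mathbf v}_i$ are unit vectors, every entry satisfies $0\le C_{ik}=\vert \tilde\lambda_i\tilde\lambda_k\langle \tilde{\mathbf v}_i,\tilde{\mathbf v}_k\rangle\vert\le 1$, and $C$ is symmetric so that the $i$-th column $c_{i.}$ and the $i$-th row coincide and the index $C_{ik}$ is unambiguous. Because $x^2\le\vert x\vert$ whenever $\vert x\vert\le 1$, this gives the pointwise bound $(C_{ik}-C_{jk})^2\le\vert C_{ik}-C_{jk}\vert$, hence
\begin{equation}
d(c_{i.},c_{j.})^2=\sum_{k=1}^{m_1}(C_{ik}-C_{jk})^2\le\sum_{k=1}^{m_1}\vert C_{ik}-C_{jk}\vert .
\end{equation}
Since the target radius is exactly $\varepsilon^2=\tfrac{l\epsilon}{2}+\sqrt{\log(m_1-l)}$, it then suffices to control the $\ell_1$ distance between columns.

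For the same-cluster upper bound ($i,j\in J_1$) I would split the sum over $k$ into the indices lying in the detected cluster $J$ and those in its complement $\bar J$. On $J$ the top eigenvectors are all nearly aligned with the signal direction of their own subspace, so each $\vert C_{ik}-C_{jk}\vert$ is of order $\epsilon$ by the within-cluster similarity threshold; summing over the $l$ indices of $J$ contributes at most $\tfrac{l\epsilon}{2}$. On $\bar J$ the entries are pure-noise overlaps whose aggregate deviation is controlled by the same Gaussian concentration bound that yields Theorem \ref{thm1}, contributing the $\sqrt{\log(m_1-l)}$ term. Re-running the estimate behind Theorem \ref{thm1} at the level of $\sum_k\vert C_{ik}-C_{jk}\vert$ rather than the marginal difference $\vert d_i-d_j\vert$ then gives $\sum_k\vert C_{ik}-C_{jk}\vert\le\tfrac{l\epsilon}{2}+\sqrt{\log(m_1-l)}$ on the same high-probability event, which is the claimed bound after taking square roots.

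For the cross-cluster lower bound ($i\in J_1$, $j\in J_2$) I would instead keep only the non-negative contributions from $k\in J_1\cup J_2$. Since the two clusters lie in different subspaces, $\langle\tilde{\mathbf v}_i,\tilde{\mathbf v}_k\rangle\approx 1$ while $\langle\tilde{\mathbf v}_j,\tilde{\mathbf v}_k\rangle\approx 0$ for $k\in J_1$ (and symmetrically for $k\in J_2$), so each such term $(C_{ik}-C_{jk})^2$ is close to $1$ and
\begin{equation}
d(c_{i.},c_{j.})^2\ge\sum_{k\in J_1\cup J_2}(C_{ik}-C_{jk})^2\gtrsim\vert J_1\vert+\vert J_2\vert .
\end{equation}
Because the assumption $\sqrt{\epsilon}\le\frac{1}{m_1-l}$ of Theorem \ref{thm1} forces $\tfrac{l\epsilon}{2}$ to be negligible while $\sqrt{\log(m_1-l)}$ grows only logarithmically in $m_1$, the right-hand side dominates $\varepsilon^2$ for $m_1$ large, which yields the lower bound. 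Finally I would collect the high-probability events used in the two cases by a union bound, so that both statements hold simultaneously with probability tending to $1$ as $m_1\to\infty$.

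The main obstacle is the same-cluster bound. Theorem \ref{thm1} only controls the marginal difference $\vert d_i-d_j\vert=\vert\sum_k(C_{ik}-C_{jk})\vert$, whereas the Euclidean column distance is governed by $\sum_k\vert C_{ik}-C_{jk}\vert$, which is strictly larger whenever the per-coordinate differences change sign. The delicate point is therefore to show that the cancellations implicitly used in Theorem \ref{thm1} are not essential, i.e.\ that the noise overlaps on $\bar J$ and the within-cluster fluctuations on $J$ can be bounded \emph{in absolute value} by the same quantity. This requires tracking each entry of $C$ through the concentration argument rather than only its row sum, and is where the bulk of the technical work lies.
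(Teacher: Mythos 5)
Your proposal follows the paper's argument almost step for step: the same pointwise reduction $(c_{ik}-c_{jk})^2\le\vert c_{ik}-c_{jk}\vert$, valid because $\vert c_{ik}-c_{jk}\vert\le 1$; the same split of the resulting $\ell_1$ sum over the cluster and its complement for the within-cluster bound; and the same restriction to $k\in J_1\cup J_2$ together with orthogonality of the two subspaces for the cross-cluster bound. The one place you diverge is the step you flag as the ``main obstacle'': you propose to re-run the concentration argument behind Theorem~\ref{thm1} entrywise so as to control $\sum_k\vert c_{ik}-c_{jk}\vert$ rather than $\vert d_i-d_j\vert$. The paper dispatches this much more cheaply, and your own opening observation already contains the key: since $C=\vert V^tV\vert$ has non-negative entries, $\sum_{k\in\bar{J}_1}\vert c_{ik}-c_{jk}\vert\le\sum_{k\in\bar{J}_1}c_{ik}+\sum_{k\in\bar{J}_1}c_{jk}$, and the MSC separation result (theorem III.2 of \cite{andriantsiory2021multislice}, which the paper's proof cites alongside Theorem~\ref{thm1}) bounds each of these cross-similarity sums directly, while the $\epsilon$-similarity definition gives $c_{ik},c_{jk}\in[1-\epsilon/2,1]$ for $k\in J_1$, hence $\vert c_{ik}-c_{jk}\vert\le\epsilon/2$ there. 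No sign cancellations ever need to be tracked, so the ``bulk of the technical work'' you anticipate reduces to a triangle inequality. On the lower bound, your estimate $d(c_{i.},c_{j.})^2\gtrsim\vert J_1\vert+\vert J_2\vert=2l$ is in fact the honest entrywise conclusion; the paper asserts the stronger $2l^2(1-\epsilon/2)^2$, which looks like a slip (per-entry $\epsilon$-similarity yields $\sum_{k\in J_1}c_{ik}^2\ge l(1-\epsilon/2)^2$, not $l^2(1-\epsilon/2)^2$), but either version dominates $\varepsilon^2=\frac{l\epsilon}{2}+\sqrt{\log(m_1-l)}$ in the regime $\sqrt{\epsilon}\le\frac{1}{m_1-l}$, provided $l$ is not too small relative to $\sqrt{\log m_1}$ --- a caveat that neither you nor the paper makes explicit. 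In short: same approach and a sound plan, with your only heavy step replaceable by non-negativity of $C$ plus the already-available separation theorem.
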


\begin{proof}
By definition of the vector $\mathbf{d}$ and $i,j \in J_1$ and $|J_1|$=l, we have
\begin{eqnarray}
d(c_{i.}, c_{j.})^2 &= &  \sum_{k\in[m_1]} (c_{ik}-c_{jk})^2 
 \le \sum_{k\in[m_1]} \vert c_{ik}-c_{jk}\vert \crcr
  &\le & \sum_{k\in J_1} \vert c_{ik}-c_{jk}\vert + \sum_{k\in\bar{J}_1} \vert c_{ik}-c_{jk}\vert \crcr
&\le& \frac{l\epsilon}{2} + \sqrt{\log(m_1 -l)}. \nonumber
\end{eqnarray} 
We have the first inequality because $-1\le c_{ik}-c_{jk} \le 1$ for all $k\in[m_1]$. The last inequality is held if by the clustering condition of the MSC method 
(theorems \ref{thm1} and theorem III.2 of \cite{andriantsiory2021multislice}).

For $i\in J_1, j \in J_2$, with the hypothesis $\mathbf{v}_i \perp \mathbf{v}_j$ by the definition of the rank decomposition and $\vert J_1\vert =\vert J_2\vert =l $, we have
\begin{eqnarray}
d(c_{i.},c_{j.})^2 &= &\sum_{k\in[m_1]} (c_{ik}-c_{jk})^2  \crcr
&=& \sum_{k\in[m_1]} c_{ik}^2 + c_{jk}^2 - 2 c_{ik} c_{jk}\crcr
&\ge & \sum_{k\in J_1}( c_{ik}^2 + c_{jk}^2 - 2 c_{ik} c_{jk}) +\sum_{k\in J_2} (c_{ik}^2 + c_{jk}^2 - 2 c_{ik} c_{jk}) \crcr
&& + \sum_{k\in [m_1]\backslash \big( J_1 \bigcup J_2 \big) }  (c_{ik}-c_{jk})^2    \label{eq:cjk_cik}
\end{eqnarray}
According to the hypothesis $\mathbf{v}_i \perp \mathbf{v}_j$, for $j\in J_2$ and $k\in J_1$ we have $c_{jk} =0$. In the same vein, for $i\in J_1 $ and $k\in J_2$, we have $c_{ik} = 0$. Using the $\epsilon$-similarity (\cite{andriantsiory2021multislice}, definition III.2), the equation \eqref{eq:cjk_cik} becomes:
\begin{eqnarray}
d(c_{i.},c_{j.})^2 \ge \sum_{k\in J_1} c_{ik}^2  +\sum_{k\in J_2}  c_{jk}^2 
\ge  2l^2(1-\frac{\epsilon}{2})^2
\end{eqnarray}
Hence, 
\begin{equation}
d(c_{i.},c_{j.}) \ge  \sqrt{2}l(1-\frac{\epsilon}{2}) \ge  \sqrt{\frac{l\epsilon}{2} +  \sqrt{\log(m_1 -l)} }
\end{equation}
which is the claim. 
\end{proof}

The detailed steps of our method is shown in algorithm \ref{algo:msc_extension}.
In this algorithm, the upper-script $j\in\{1,2,3\}$, appearing for instance in $J^{(j)}$, represents the studied mode of the tensor. The under-script $i$ of $J^{(j)}_i$ identifies the different clusters in the corresponding mode $j$.

\begin{algorithm}[h] 
		\caption{MSC-DBSCAN}
  \label{algo:msc_extension}
  \begin{algorithmic}[1]
		\Require 3rd-order tensor $\mathcal{T}\in\mathbb{R}^{m_1\times m_2\times m_3}$, threshold parameter $\epsilon$
		\Ensure $(J_i^{(1)}, J_i^{(2)}, J_i^{(3)} )_i$
		\For{$j$ in $\{1,2,3\}$}
		    \State Initialize the matrix $M$
			   \State Initialize $ \lambda_0\gets 0$
			    \For{ $i$ in $\{1,2,\cdots,m_j\}$}
				    \State Compute : $C_i\gets T_i^t T_i $ 
				    \State Compute the spectral decomposition of $C_i$
				    \State Compute $M(:,i) \gets \lambda_i * \mathbf{v}_i$ 
				    
				     \If {$\lambda_i >\lambda_0$}
  	                 \State $\lambda_0 \gets \lambda_i$ 
  	                \EndIf
			    \EndFor
     \State Compute : $V \gets M / \lambda_0$
     \State Compute : $C \gets \vert V^t V\vert $ 
     \State Compute : $\mathbf{d}$ the vector marginal sum of $C$ and sort it 
     \State Initialization of $J^{(j)}$ using the maximum gap in $\mathbf{d}$ sorted 
     \State Compute : $l\gets \vert J^{(j)}\vert$ 
     
    \While {not convergence of the elements of $J^{(j)}$ (theorem \ref{thm1})}
         \State Update the element of $J^{(j)}$ (excluding $i$ s.t. $d_i$ is the smallest value that violates theorem \ref{thm1}) 
        \State Compute $l$ 
    \EndWhile
   \State  Build the similarity of all indices in $J^{(j)}$ from the matrix $C$ 
   \State Apply DBSCAN algorithm with hyper-parameter $\varepsilon = \sqrt{\frac{l\epsilon}{2} +  \sqrt{\log(m_1 -l)} }$ and $ Minpts = 2$ in order to get the set $(J^{(j)}_i)_i$.
		\EndFor
  \end{algorithmic}		
	\end{algorithm}

\section{Experiments}
\label{sec:experiments}

To evaluate the performance of the present clustering method, we first use two indices suitable for synthetic data. First, we discuss the ARI   \cite{ARI}, and  then the cluster quality with respect to the similarity will tested through the root mean square error (RMSE)  \cite{ChaiRMSEMAE2014}. We start the experiments with the synthetical data, and  then we process a real dataset.

\

\ 

\noindent{\bf Synthetical results --}
Here, we generate a synthetic dataset as a sum of 2 rank-1 tensors plus the noise tensor. We run the MSC and MSC-DBSCAN algorithms on our data, detect the different clusters, and compare their  quality. 

The data is built as in equation \eqref{eq:msc_problem}. We consider a tensor with $r=2$ rank-1 tensors. Each factor matrix contains 2 columns vectors which are orthogonal.  Since the MSC cluster selection is independent in each mode, the following explanation focuses only on the first mode of the tensor. The same operations in the remaining modes are easily
implemented and analogous.

We generate tensor datasets with $m_1=m_2=m_3=50$, and 
introduce two clusters lying in two different subspaces within the data.
In mode-1, the two cluster sets of the slices are denoted by  $J_1$ and $J_2$; they obey  $J_1\bigcap J_2=\emptyset$. Based on the similarity measure, the elements of $J_1$ are highly dissimilar to the elements of $J_2$. The two clusters have the same cardinality $\vert J_1 \vert =\vert J_2\vert =10$. The signal tensor is constructed as the following way:

\begin{eqnarray}
\mathbf{u}_1(i) &=& \left \{
\begin{array}{rcl}
\frac{1}{\sqrt{\vert J_1\vert }}\qquad \text{ if } i\in J_1,\\
0 \qquad \text{otherwise}
\end{array}
\right. \crcr
\mathbf{u}_2(i) &=& \left \{
\begin{array}{rcl}
\frac{1}{\sqrt{\vert J_2\vert }}\qquad \text{ if } i\in J_2,\\
0 \qquad \text{otherwise}
\end{array}
\right.
\label{eq:data}
\end{eqnarray}

The same cluster construction
of equation \eqref{eq:data} 
extends to mode-2 and mode-3 of the tensor. The entries of the tensor noise are standard Gaussian random variable independent and identically distributed.

First, we evaluate the performance of the MSC-DBSCAN  by varying the signal strength value $\gamma_i$, $i=1,2$, for the rank-1 tensors. We assume that the 2 rank-1 tensors have the same weight $\gamma_i = \gamma$ and let $\gamma$ vary from 50 to 100. We run the algorithm 10 times for each value of $\gamma$. The two cluster sets of each mode have the same cardinality.  The value of the $\epsilon$ is chosen to be 0.001 during the experiment in order to satisfy the hypothesis of the main MSC  theorem. For each iteration, we rate the quality of the output by computing its ARI.  
The results are displayed in 
figure \ref{fig:ARI_Extension}.

\begin{figure}[htbp]	
\centering	 	
\includegraphics[width=10cm, height=6.5cm]{./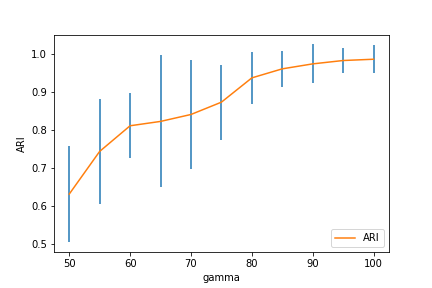}
\caption{The ARI of the output of MSC-DBSCAN for $\gamma$ ranges from 50 to 100.} 	 	
\label{fig:ARI_Extension}
\end{figure}

The curve in figure \ref{fig:ARI_Extension} represents the mean and the vertical blue line represents the standard deviation of the ARI for the 10 experiments. 
As a result, the ARI increases and tends to 1 as the value of $\gamma$ becomes higher. It is stable and very close to the maximum at $\gamma=80$. 
This shows that the MSC-DBSCAN performs well and is able to recover the 2 clusters for increasing values of the signal strength. 

Secondly, on the same synthetic data, we compare the quality  of the MSC and MSC-DBSCAN algorithms with the same value of the hyperparameter $\epsilon$. We set the value of $\gamma$ to $80$. We run the experiment 20 times and regenerate the data for each iteration. The 
RMSE serves as an evaluation of the clustering performance.
This error is computed in the sub-cube (triclustering) generated by the combination of the clusters of the 3 modes.

\begin{figure}[htbp]	
\centering	 	
\includegraphics[width=10cm, height=6.5cm]{./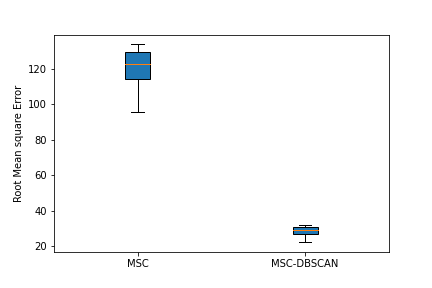}	
\caption{Represents the root mean square error of the sub-cube(s) generated from the output of MSC  and  MSC-DBSCAN algorithms.} 	 	
\label{fig:MSC_extension}
\end{figure}

Figure \ref{fig:MSC_extension} shows that the RMSE of the sub-cube given with the MSC algorithm is very high compared to the RMSE of the sub-tensors delivered by the MSC-DBSCAN algorithm. 
The reason of this difference has the following explanation. There are 2 well-separated clusters in each mode. The MSC algorithm combines the elements of these two clusters into one cluster because they have almost the same value within the vector $\mathbf{d}$. On the other hand, the MSC-DBSCAN has split the output of the MSC method into two clusters. 
This obviously reduces the RMSE (clusters of smaller size generate high similarity). 
The MSC-DBSCAN method enhances
the clustering quality in the case of multiple clusters. 

Although, we have not presented the results here, 
we note the following fact: if the clusters are well separated (a significant difference is noticed in the vector marginal sum $\mathbf{d}$), then the outputs of the MSC and the MSC-DBSCAN are the same.

\

\noindent{\bf Real  data: preliminary results --}
We run the MSC-DBSCAN on the same real data \cite{andriantsiory2021multislice} set used in the MSC. The output of
the MSC-DBSCAN  delivers only one cluster lying in one subspace.  This cluster  is identical to the output of the MSC algorithm. Therefore,
this confirms that such real dataset contains
only a single cluster. 
Due to the lack of real datasets, we have not  perform other experiments on the present algorithm. 
This must be done in the future. 

\

The MSC-DBSCAN  code is available at this link: 

\url{https://github.com/ANDRIANTSIORY/MSC-DBSCAN}.

\section{Conclusion}
 The MSC method and performance rest on a particular type of input: 
 a rank-1 tensor. 
 If the tensor can be decomposed as a sum of a rank-1 tensors, the MSC algorithm is of limited efficiency and even relevance.
 This situation may very well occur for real dataset. 
 We have proposed an  extension and improvement of the MSC method able to address that generic situation: the MSC-DBSCAN. The MSC-DBSCAN provides the same output cluster as MSC for a rank-1 tensor. In addition, it is able to extract several clusters in   3rd-order datasets,  sum of rank-1 tensors. 
 Statistical theorems ensure the method reliability.  
 We have performed conclusive experimental tests and shown the effectiveness of the new algorithm compared to MSC method.

\bibliographystyle{splncs04}
\bibliography{myfile}
\end{document}